\newcommand{\norm}[1]{\left\lVert#1\right\rVert}
\begin{document}
\title{Application of the Ring Theory in the Segmentation\\ of Digital Images}

\author{Yasel Garc\'es,\ Esley Torres,\ Osvaldo Pereira and Roberto Rodr\'iguez}
\institute{Institute of Cybernetics, Mathematics and Physics, Havana, Cuba\\ \quad \email{88yasel@gmail.com},
\email{esley@icimaf.cu},
\email{rrm@icimaf.cu}}
\maketitle

\begin{abstract}
Ring theory is one of the branches of the abstract algebra that has been broadly used in images. However, ring theory has not been very related with image segmentation. In this paper, we propose a new index of similarity among images using $\mathbb{Z}_{n}$ rings and the entropy function. This new index was applied as a new stopping criterion to the Mean Shift Iterative Algorithm with the goal to reach a better segmentation. An analysis on the peformance of the algorithm with this new stopping criterion is carried out. The obtained results proved that the new index is a suitable tool to compare images. 
\end{abstract}


\section{Introduction}
Many techniques and algorithms have been proposed for digital image segmentation. Traditional segmentation such as thresholding, histograms or other conventional operations are rigid methods. Automation of these classical approximations is difficult due to the complexity in shape and variability within each individual object in the image.

The mean shift is a non-parametric procedure that has demostrated to be an extremely versatile tool for feature analysis. It can provide reliable solutions for many computer vision tasks \cite{Comaniciu00}. Mean shift method was proposed in 1975 by Fukunaga and Hostetler \cite{Fukunaga}. It was largely forgotten until Cheng's paper retook interest on it \cite{Cheng95}. Segmentation by means of the Mean Shift Method carries out as a first step a smoothing filter before segmentation is performed \cite{Comaniciu00,Comaniciu02}.

Entropy is an essential function in information theory and this has had a special uses for images data, e.g., restoring images, detecting contours, segmenting images and many other applications \cite{Suyash06,Zhang03}. However, in the field of images the range of properties of this function could be increased if the images are defined in $\mathbb{Z}_{n}$ rings. The inclusion of the ring theory to the spatial analysis is achieved considering images as a matrix in which the elements belong to the cyclic ring $\mathbb{Z}_{n}$. From this point of view, the images presents cyclical properties associated to gray level values.

Ring Theory has been well-used in cryptography and many others computer vision tasks \cite{Neiderreiter}. The inclusion of ring theory to the spatial analysis of digital images, it is achieved considering the image like a matrix in which the elements belong to finite cyclic ring $\mathbb{Z}_{n}$. The ring theory for the Mean Shift Iterative Algorithm was employed by defining images in a ring $\mathbb{Z}_{n}$. A good performance of this algorithm was achieved. Therefore, the use of the ring theory could be a good structure when one desires to compare images, due to that the digital images present cyclical properties associated with the pixel values. This property will allow to increase or to diminish the difference among pixels values, and will make possible to find the edges in the analyzed images.

In this paper, a new similarity index among images is defined, and some interesting properties based on this index are proposed. We compare also the instability of the iterative mean shift algorithm ($\mathit{MSHi}$)  by using this new stopping criterion with regard to the stopping criterion used in \cite{Rodriguez11,Rodriguez11a,Rodriguez12,Rodriguez08}. Furthermore, we make an extension of \cite{Yasel13}, and we expand the theoretical aspects by studying in depth the cyclical properties of rings applied to images. For this purpose, and in order to mark the difference of this paper with regard to \cite{Yasel13}, some issues are pointed out below: 

\begin{itemize}
\item Revision of the mean shift theory.
\item  Important elements of the ring $G_{k\times m} (\mathbb{Z}_{n})(+,\cdot)$ are given: neutral, unitary, and inverse. In particular, the inverse element was used so much to the theoretical proofs as well as practical aspects. 
\item Explanation of strong equivalent images by using histograms.
\item Definition of equivalence classes.
\item Quotient space. Definition and existence.
\item Natural Entropy Distance (NED) definition.
\item  Configuration of the algorithm $\mathit{MSHi}$ with the NED distance.
\end{itemize}

The remainder of the paper is organized as follows. In Section \ref{Theorical aspects}, the more significant theoretical aspects of the mean shift and entropy are given. Section \ref{Similarity} describes the similarity index, its consecuences for entropy function and the significance of the cyclic ring $\mathbb{Z}_{n}$ for images. Also, it is defined the quotient space of strongly equivalent images and some properties of entropy are proved. The experimental results, comparisons and discussion are presented in Section \ref{experiments}. Finally, in Section \ref{conclusion} the conclusions are given.


\section{Theoretical Aspects}
\label{Theorical aspects}

\subsection{Mean Shift}
\label{sub:Mean Shift}
One of the most popular nonparametric density estimators is kernel density estimation. Mathematically, the general multivariate kernel density estimate at the point $x$, is defined by:
\begin{eqnarray}
f(x) =\frac{1}{nh^d}\sum\limits_{i=1}^{n}K\left(\frac{x-x_{i}}{h}\right) .
\end{eqnarray}
where $n$ data points $x_{i},\ i=1,\ldots,n$ represent a population with some unknown density function $f(x)$.

In the case of images:
\begin{align}
\label{eq:caso imagenes kernel}
K_{H}(x)=\vert H \vert^{-1/2}K\left( H^{-1/2}x \right) 
\end{align}
where $K(z)$ is the $d-variate$ kernel function with compact support satisfying the regularity constraints as described in \cite{Wand95}, and $H$ is a symmetric positive definite bandwidth matrix.

For image segmentation, the feature space is composed of two independent domains: the $spatial/lattice$ domain and the $range/color$ domain. We map a pixel to a multidimensional feature point which includes the $p$ dimensional spatial lattice ($p = 2$ for image) and $q$ dimensional color ($q = 1$ for gray scale images and $q = 3$ for color image and $q > 3$ for multispectral image). Due to the different natures of the domains, the kernel is usually broken into the product of two different radially symmetric kernels (subscript $s$ is refer to the spatial domain, and $r$ to the color range):
\begin{align}
\label{imagen kernel defintion}
K_{h_{s},h_{r}}(x)=\frac{c}{(h_{s})^{p}(h_{r})^{q}}k_{s}\left( \norm{\frac{x}{h_{s}} }^{2} \right) k_{r}\left( \norm{\frac{x}{h_{r}}}^{2} \right)
\end{align}
where $x$ is a pixel, $k_{s}$ and $k_{r}$ are the profiles used in the two respective domains, $h_{s}$ and $h_{r}$ are employed bandwidths in $spatial-range$ domains and $c$ is the normalization constant. Using the equation (\ref{imagen kernel defintion}), the kernel density estimator is:
\begin{align}
\label{eq:imagen density estimation}
\hat{f}(x)=\frac{c}{n(h_{s})^{p}(h_{r})^{q}}\sum_{i=1}^{n} k_{s}\left( \norm{\frac{x-x_{i}}{h_{s}}}^{2} \right) k_{r}\left( \norm{\frac{x-x_{i}}{h_{r}}}^{2} \right).
\end{align}
As was shown in (\ref{imagen kernel defintion}) and (\ref{eq:imagen density estimation}), there are two main parameters that have to be defined by the user: the spatial bandwidth $h_{s}$ and the range bandwidth $h_{r}$.

The {\itshape Epanechnikov} function is chosen as the kernel function in this work, this function is defined as: 

\begin{eqnarray}
K_E(x)=\left\{\begin{array}{ll}
\frac{1}{2} c^{-1}_{d} (d+2)\left( 1-\left\|x\right\|^2\right),  & \mbox{if}\ \left\|x\right\|<1\\
& \\
0, & \mbox{otherwise}.\\
\end{array}
\right.
\end{eqnarray}

The gradient of function $f(x)$ is formulated as

\begin{eqnarray}
\widehat{\nabla} f(x) = \nabla \widehat{f(x)} =\frac{1}{nh^d}\sum\limits_{i=1}^{n}\widehat K\left( \frac{x-x_{i}}{h}\right) ,
\end{eqnarray}

\begin{align}
\label{label4}
\widehat{\nabla}f_E(x)&=
\frac{1}{n(h^d c_d)}\frac{d+2}{h^2}\sum\limits_{x_i \in S_h(x)} (x_i-x)\nonumber\\
&=\frac{n_x}{n(h^d c_d)}\frac{d+2}{h^2}\frac{1}{n_x}\sum\limits_{x_i \in S_h(x)} (x_i-x),
\end{align}
where region $S_{h}(x)$ is a hypersphere of radius $h$ having volume $h^{d}c_{d}$, centred at $x$, and containing $n_{x}$ data points; that is, the uniform kernel. By simplicity, in this paper, we only approach the case $d=1$ corresponding to gray level images, but could be extended in the same way to color ($d=3$) and multispectral ($d>3$) images. In addition, the last factor in expression (\ref{label4}) is called the sample mean shift

\begin{align}
\label{Mean Shift Epanechnikov}
M_{h,U}(x)=\underbrace{\frac{1}{n_x}\sum\limits_{x_i \in S_h(x)}(x_i-x)}_\text{\large mean of shift values}= \underbrace{\left(\frac{1}{n_x}\sum\limits_{x_i \in S_h(x)}x_i\right)-x}_\text{\large mean shift}
\end{align}

The quantity $\displaystyle\frac{n_x}{n(h^{d} c_{d})}$ is the kernel density estimate $\widehat{f_U}(x)$ (where $U$ means the uniform kernel) computed with the hyper sphere $S_{h}(x)$, and thus we can write the expression (\ref{label4}) as:

\begin{eqnarray}\label{label6}
\widehat{\nabla}f_{E}(x)=\widehat{f_{U}}(x)\frac{d+2}{h^2} M_{h,U}(x)
\end{eqnarray}
which yields,
\begin{eqnarray}\label{label7}
M_{h,U}(x)=\frac{h^2}{d+2}\frac{\widehat{\nabla}f_E(x)}{\widehat{f_{U}}(x)}.
\end{eqnarray}

Expression (\ref{label7}) shows that an estimate of the normalized gradient can be obtained by computing the sample mean shift in a uniform kernel centered on $x$. In addition, the mean shift has the gradient direction of the density estimate at point $x$. Since the mean shift vector always points towards the direction of the maximum density increase, it can define a path leading to a local density maximum; that is, toward the density mode. 

A generalization to others kernels is achieved using {\it profile} definition and {\it shadow kernel} definition. Moreover, a direct relationship settles down between the kernel  used for mean shift vector $M_h(x)$ and the one used for the probabiity density function \cite{Cheng95}. {\it Profile} and {\it shadow} kernels are two suitable definitions to prove important and relevant properties for kernels and mean shift vector. 

In \cite{Comaniciu00}, it was proved that the obtained {\itshape mean shift procedure} by the following steps, guarantees the convergence:
	
\begin{itemize}
\item computing the mean shift vector $M_h(x)$
\item	translating the window $S_h(x)$ by $M_h(x)$.
\end{itemize}

Therefore, if the individual mean shift procedure is guaranteed to converge, a recursively procedure of the mean shift also converges. Other related works with this issue can be seen in \cite{Grenier06,Suyash06}.

\subsection{Entropy}
\label{sub:Entropy}
Entropy is a measure of unpredictability or information content. In the space of the digital images the entropy is defined as:
\begin{definition}[Image Entropy]
The entropy of the image $\mathcal{A}$ is defined by
\begin{equation}
\label{entropy definition}
E(\mathcal{A})=-\sum_{x=0}^{2^{B}-1}p_{x}log_{2}{p_{x}},
\end{equation}
where $B$ is the total quantity of bits of the digitized image $\mathcal{A}$ and $p(x)$ is the probability of occurrence of a gray-level value.
\end{definition} 

By agreement $\log_{2}(0)=0$ \cite{Shannon48}.Within a totally uniform region, entropy reaches the minimum value. Theoretically speaking, the probability of occurrence of the gray-level value, within a uniform region is always one. In practice, when one works with real images the entropy value does not reach, in general, the zero value. This is due to the existent noise in the image. Therefore, if we consider entropy as a measure of the disorder within a system, it could be used as a good stopping criterion for an iterative process, by using $\mathit{MSHi}$. More goodness on entropy applied to image segmentation algorithm can be seen in \cite{Rodriguez11a,Zhang03}.

\section{Similarity Index with Entropy Function}
\label{Similarity}
In recent works \cite{Rodriguez11,Rodriguez11a,Rodriguez12,Rodriguez08}, the entropy has been an important point in order to define a similarity index to compute the difference between two images. One of the most common criterion are shown in the next equation:
\begin{equation}
\label{old criterion}
\nu(\mathcal{A},\mathcal{B})=\vert E(\mathcal{A})-E(\mathcal{B}) \vert , 
\end{equation}
where $E(\cdot)$ is the function of entropy and the algorithm is stopped when $\nu(\mathcal{A}_{k},\mathcal{A}_{k-1})\leq \epsilon$. Here $\epsilon$ and $k$ are respectively the threshold to stop the iterations and the number of iterations.

\begin{definition}[Weak Equivalence in Images]
Two images $\mathcal{A}$ and $\mathcal{B}$ are weakly equivalents if
$E(\mathcal{A})=E(\mathcal{B})$. We denote the weak equivalence between $\mathcal{A}$ and $\mathcal{B}$ using $\mathcal{A}\asymp\mathcal{B}$.
\end{definition}
\vspace{-.2in}
\parbox{9cm}{
Figure \ref{different images comparison} shows two different images of $64\times 64$ bits. A reasonable similarity index should present a big difference between Figure \ref{one_a} and Figure \ref{two_b}. However, by using the expression (\ref{old criterion}), we obtain that: 
$$\nu(Figure\ \ref{one_a},\ Figure\ \ref{two_b})=0.
$$}\hspace{-1.7in}\parbox{3cm}{
\begin{figure}[H]
\centering
\subfigure[]{\includegraphics[scale=1]{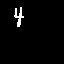}\label{one_a}}
\subfigure[]{\includegraphics[scale=1]{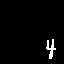}\label{two_b}}
\vspace{-.2in}
\caption{Dissimilar images}
\label{different images comparison}
\end{figure}}

The similarity index in (\ref{old criterion}) never consider the spatial information between the images $\mathcal{A}$ and $\mathcal{B}$. For this reason, it is possible to have two very different images and to obtain a small value by using (\ref{old criterion}). This is a strong reason to consider that the similarity index (\ref{old criterion}) is not appropriated to estimate the distance between two images. 

Taking into account the issues raised above, it is necessary consider a new similarity index based on the following conditions:
\begin{enumerate}
\item{Consider the use of the entropy function.}
\item{Take into account the spatial information between both images.}
\end{enumerate}

The two following subsections have been dedicated to how face this problem by means of the employment of the ring theory. 

\subsection{Image via ring theory}
\label{Image via Ring Theory}

It is natural to think that two images are similar if their subtraction is close to zero. The problem of this idea is that, in general, when the subtraction gives negative values many authors consider to truncate to zero these elements. This consideration, in general, it not describe the difference between two images, and in some cases, it is possible to lose important information. For this reason, it is necessary to define a structure such that the operations between two images are intern.
\begin{definition}[$\mathbb{Z}_{n}$ Ring]
The $\mathbb{Z}_{n}$ ring is the partition of $\mathbb{Z}$ (set of integer numbers) in which the elements are related by the congruence module $n$.
\end{definition}

Mathematically speaking, we say that $a$ is in the class of $b$ ($a\in C_{b}$) if $a$ is related by $(\sim )$ with $b$, where
\begin{eqnarray*}
a\sim b &\Longleftrightarrow a \equiv b (mod\ n) \overset{def}{\Longleftrightarrow} (b-a)\in n\mathbb{Z},\quad \mbox{where}\\
n\mathbb{Z} &= \left\lbrace 0,n,2n,\ldots \right\rbrace \quad \mbox{and}\quad n\in \mathbb{Z}\quad \mbox{is fixed}. 
\end{eqnarray*}
Consequently $\mathbb{Z}_{n} = \lbrace C_{0},\ C_{1},\ldots ,\ C_{n-1}\rbrace$.

We use the structure of the $\mathbb{Z}_{n}$ ring in the set of the images of size $k\times m$ where the pixel values are intergers belonging to $[0,n-1]$ and we denote this set as $G_{k\times m}(\mathbb{Z}_{n})$. We obtain the next result.
\begin{theorem}
The set $G_{k\times m}(\mathbb{Z}_{n})(+,\cdot)$, where $(+)$ and $(\cdot )$ are respectively the pixel-by-pixel sum and multiplication in $\mathbb{Z}_{n}$, has a ring structure.
\end{theorem}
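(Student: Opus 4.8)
The plan is to verify that $G_{k\times m}(\mathbb{Z}_n)$, equipped with the pixel-by-pixel operations, satisfies each of the ring axioms, and to do so by reducing every axiom to the corresponding property already known for the coordinate ring $\mathbb{Z}_n$. The key observation is that an element of $G_{k\times m}(\mathbb{Z}_n)$ is simply a $k\times m$ array $\mathcal{A}=(a_{ij})$ with each entry $a_{ij}\in\mathbb{Z}_n$, and both $(+)$ and $(\cdot)$ act independently in each coordinate: $(\mathcal{A}+\mathcal{B})_{ij}=a_{ij}+b_{ij}$ and $(\mathcal{A}\cdot\mathcal{B})_{ij}=a_{ij}\,b_{ij}$, where the right-hand sides are computed in $\mathbb{Z}_n$. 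Since $\mathbb{Z}_n$ is itself a commutative ring with unity under congruence modulo $n$, the strategy is to lift each of its axioms coordinatewise.

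First I would establish that $(G_{k\times m}(\mathbb{Z}_n),+)$ is an abelian group. Closure is immediate because each sum $a_{ij}+b_{ij}$ lies in $\mathbb{Z}_n$. Associativity and commutativity follow by checking them entry by entry, where they reduce to the known associativity and commutativity of addition in $\mathbb{Z}_n$. The neutral element is the all-zero image $\mathcal{O}=(0)_{ij}$, and the additive inverse of $\mathcal{A}=(a_{ij})$ is the image $(-a_{ij})_{ij}$ whose entries are the additive inverses in $\mathbb{Z}_n$; these exist precisely because $\mathbb{Z}_n$ is an additive group. Next I would verify that $(\cdot)$ is associative and distributes over $(+)$, again by expanding both sides of each identity coordinatewise and invoking the corresponding law in $\mathbb{Z}_n$. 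For instance, the distributive law $\mathcal{A}\cdot(\mathcal{B}+\mathcal{C})=\mathcal{A}\cdot\mathcal{B}+\mathcal{A}\cdot\mathcal{C}$ holds because at each position $a_{ij}(b_{ij}+c_{ij})=a_{ij}b_{ij}+a_{ij}c_{ij}$ in $\mathbb{Z}_n$.

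These verifications are all routine, and no single step presents a genuine obstacle; the only point requiring care is bookkeeping rather than depth. The essential content of the argument is the \emph{product-ring} principle: a finite direct product of copies of a ring, indexed here by the $k\times m$ pixel positions, inherits a ring structure with componentwise operations. Thus, more conceptually, $G_{k\times m}(\mathbb{Z}_n)$ is isomorphic as a set-with-operations to the product ring $(\mathbb{Z}_n)^{km}$, and the theorem is an instance of the standard fact that products of rings are rings. I would mention this identification to make clear why every axiom transfers automatically, and to set up the subsequent discussion of the neutral, unitary, and inverse elements promised in the introduction — the unity being the all-ones image $(1)_{ij}$ when one wishes to view $G_{k\times m}(\mathbb{Z}_n)$ as a ring with identity.
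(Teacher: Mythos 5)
Your proof is correct and follows essentially the same route as the paper: the paper's own (very terse) proof simply observes that since the operations are defined pixel by pixel, the ring axioms of $\mathbb{Z}_n$ are inherited coordinatewise, which is exactly the argument you carry out in detail. Your identification of $G_{k\times m}(\mathbb{Z}_n)$ with the product ring $(\mathbb{Z}_n)^{km}$ is a clean conceptual packaging of the same idea, not a different approach.
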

\begin{proof}
As the pixels of the image are in $\mathbb{Z}_{n}$, they satisfy the ring axioms. The operation between two images was defined pixel by pixel, then it is trivial that $G_{k\times m}(\mathbb{Z}_{n})$ under the operations $(+,\cdot)$ of the $\mathbb{Z}_{n}$ ring inherits the ring structure.
\end{proof} 

The following observations are important in order to understand the main features of the ring $G_{k\times m} (\mathbb{Z}_{n})(+,\cdot)$:

\begin{itemize}
\item{The neutral element for the sum is the image with all pixels have zero value, it is denoted by $\mathcal{O}$.}
\item{Note that the image with all pixel values are equal to one ($\mathcal{I}$) is the neutral element respect to the multiplication, (observe that the multiplication between two images was defined element by element).}
\item{If $\mathcal{A}\in G_{k\times m} (\mathbb{Z}_{n})(+,\cdot)$ then $E(\mathcal{A})=E(-\mathcal{A})$, where $-\mathcal{A}$ is the additive inverse of $\mathcal{A}$ in $G_{k\times m} (\mathbb{Z}_{n})(+,\cdot)$.}
\end{itemize}

Working with images with coefficients in $\mathbb{Z}_{n}$, one has a better way to analyze images, and the cyclical effect opens a new way that largely reflects the resemblance of a gray level of a pixel with regarding its neighbors.
\begin{definition}[Strong Equivalence in Images]
Two images $\mathcal{A}, \mathcal{B} \in G_{k\times m}(\mathbb{Z}_{n})(+,\cdot)$ are strongly equivalents if
$$
\mathcal{A}=\mathcal{S}+\mathcal{B},
$$
where $\mathcal{S}$ is a scalar image. We denote the strong equivalence between $\mathcal{A}$ and $\mathcal{B}$ as $\mathcal{A}\cong \mathcal{B}$.
\end{definition}

Note that if $\mathcal{A}=\mathcal{S}+\mathcal{B} \Rightarrow \exists\   \overline{\mathcal{S}}\ \vert\ \mathcal{B}=\overline{\mathcal{S}}+\mathcal{A}$ and $\overline{\mathcal{S}}=-(\mathcal{S})$, where $-(\mathcal{S})$ is the additive inverse of $\mathcal{S}$ in the ring. This is calculated using the inverse of each pixels of $\mathcal{S}$ in $\mathbb{Z}_{n}$.

\begin{theorem}
If two images $\mathcal{A}$ and $\mathcal{B}$ are strongly equivalents then they are weakly equivalents.
\end{theorem}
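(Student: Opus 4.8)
The plan is to show that strong equivalence implies equality of entropies, which is precisely the definition of weak equivalence. So I would start from the hypothesis $\mathcal{A} \cong \mathcal{B}$, which by definition means $\mathcal{A} = \mathcal{S} + \mathcal{B}$ for some scalar image $\mathcal{S}$, where the sum is the pixel-by-pixel addition in $\mathbb{Z}_n$. A scalar image has all its pixels equal to a single constant $c \in \mathbb{Z}_n$. The key observation is that adding a fixed constant $c$ modulo $n$ to every pixel acts as a bijection on the set of gray levels $\{0, 1, \ldots, n-1\}$, namely the map $x \mapsto x + c \pmod{n}$, whose inverse is $x \mapsto x - c \pmod n$. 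Since this is a permutation of the gray-level values, it merely relabels the histogram bins without merging or splitting any of them.

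First I would make the histogram/probability relabeling precise. If $p_x$ denotes the probability of occurrence of gray level $x$ in $\mathcal{B}$, then because each pixel of $\mathcal{A}$ is obtained from the corresponding pixel of $\mathcal{B}$ by the bijection $x \mapsto x + c \pmod n$, the probability of occurrence of gray level $y$ in $\mathcal{A}$ equals the probability of occurrence of gray level $y - c \pmod n$ in $\mathcal{B}$. In symbols, writing $q_y$ for the probabilities associated with $\mathcal{A}$, we have $q_{y} = p_{y - c \bmod n}$. This holds because the spatial positions are untouched: each pixel simply has its value shifted by the same constant, so the count of pixels carrying any given value is preserved under the relabeling.

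Next I would substitute into the entropy formula from the definition in the excerpt. We compute
\begin{align*}
E(\mathcal{A}) &= -\sum_{y=0}^{n-1} q_{y} \log_2 q_{y} = -\sum_{y=0}^{n-1} p_{y-c \bmod n} \log_2 p_{y-c \bmod n}.
\end{align*}
Reindexing the sum by the substitution $x = y - c \pmod n$, which runs over exactly the same set $\{0, 1, \ldots, n-1\}$ because $x \mapsto x + c \pmod n$ is a permutation of that set, yields
\begin{align*}
E(\mathcal{A}) &= -\sum_{x=0}^{n-1} p_{x} \log_2 p_{x} = E(\mathcal{B}).
\end{align*}
Therefore $E(\mathcal{A}) = E(\mathcal{B})$, which by the definition of weak equivalence means $\mathcal{A} \asymp \mathcal{B}$, completing the proof.

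I expect the only real subtlety — and the step most worth stating carefully — to be the claim that the constant shift permutes the gray-level bins, so that the entropy sum is invariant under reindexing. This is exactly where the cyclic structure of $\mathbb{Z}_n$ is essential: in ordinary integer arithmetic, adding a constant and truncating at $0$ or $n-1$ would collapse several values together and could change the histogram, but working modulo $n$ guarantees the shift is a genuine bijection and hence entropy-preserving. Everything else is a routine change of summation index, so no serious obstacle remains once the histogram-permutation observation is in place.
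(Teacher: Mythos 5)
Your proof is correct and follows essentially the same route as the paper: both derive $E(\mathcal{A}) = E(\mathcal{S}+\mathcal{B}) = E(\mathcal{B})$ from the fact that adding a constant modulo $n$ only relabels gray levels without altering their frequencies. The only difference is that you make rigorous (via the bijection $x \mapsto x + c \pmod{n}$ and an explicit reindexing of the entropy sum) what the paper states informally in one sentence, which is a strengthening rather than a departure.
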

\begin{proof}
If $\mathcal{A}$ and $\mathcal{B}$ are strongly equivalents then $\mathcal{A}=\mathcal{S}+\mathcal{B}$ where $\mathcal{S}$ is a scalar image. Then $E(\mathcal{A}) =E(\mathcal{S}+\mathcal{B})$ but $\mathcal{S}$ is a scalar image and for this reason the sum $\mathcal{S}+\mathcal{B}$ only change in $\mathcal{B}$ the intensity of each pixel, but this does not change the number of different intensities or the frequency of each intensity in the image. Then, $E(\mathcal{S}+\mathcal{B}) =E(\mathcal{B})$. Finally we obtain that $E(\mathcal{A})=E(\mathcal{B})$ and they are weakly equivalents.
\end{proof}

Note that the shown images in Figure \ref{different images comparison} are weakly equivalents, but they are not strongly equivalents. This is an example that $\mathcal{A}\asymp \mathcal{B} \nRightarrow \mathcal{A}\cong \mathcal{B}$. In general, it is important to understand that two images strongly equivalents have the same histogram of frecuency, except for one uniform traslation of all gray levels.

Consider the next example to see the importance of the ring theory in the operations among images. The image in Figure \ref{histogram original image} has a histogram of frecuency that is shown in Figure \ref{Histograma imagenen de ejemplo}. If it is compute the addition and subtraction of the Figure \ref{histogram original image} by a scalar image $\mathcal{S}$ (where $\mathcal{S}$ has all pixels equal to $100$), using the ring theory are obtained the histograms of frecuency that are shown in Figure \ref{frecuencia ring sum} and Figure \ref{frecuencia ring rest}.

\begin{figure}[H]
\centering
\subfigure[Original Image]{\includegraphics[width=4cm, height=3cm] {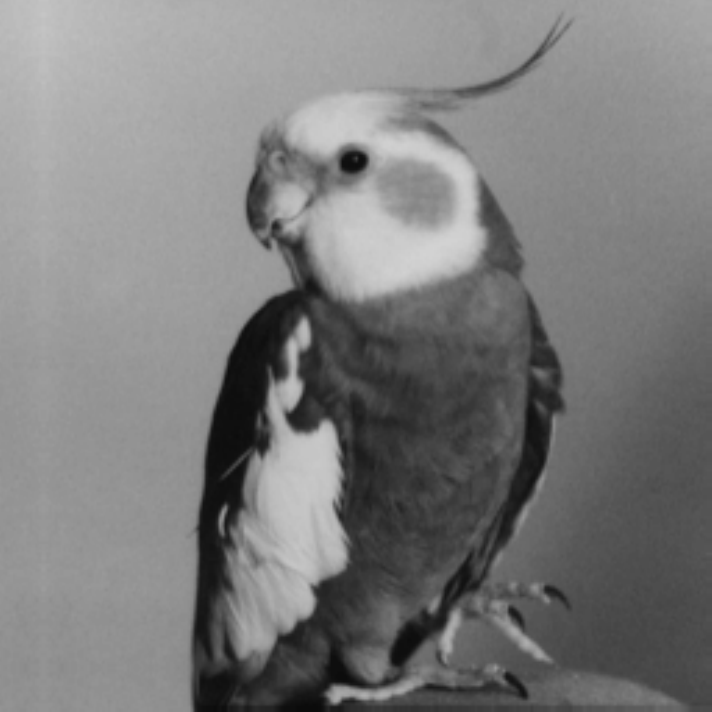}\label{histogram original image}} 
\subfigure[Original histogram]{\includegraphics[width=4.5cm, height=3.2cm]
{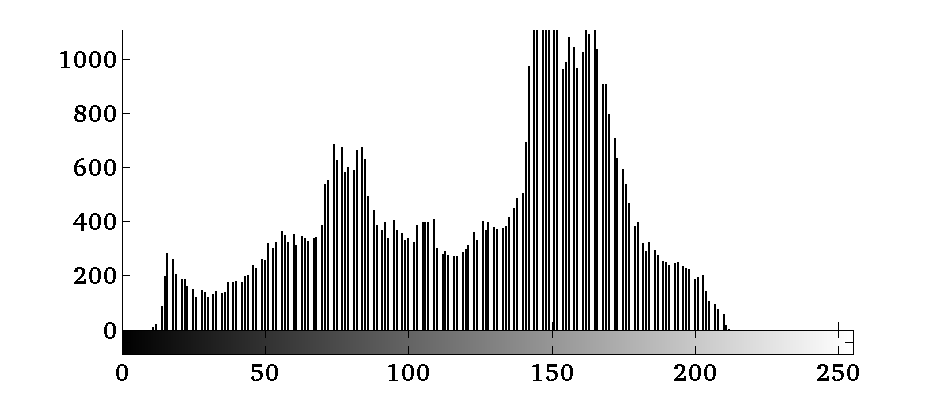}\label{Histograma imagenen de ejemplo}}
\subfigure[Ring addition]{\includegraphics[width=4.5cm, height=3.2cm]%
{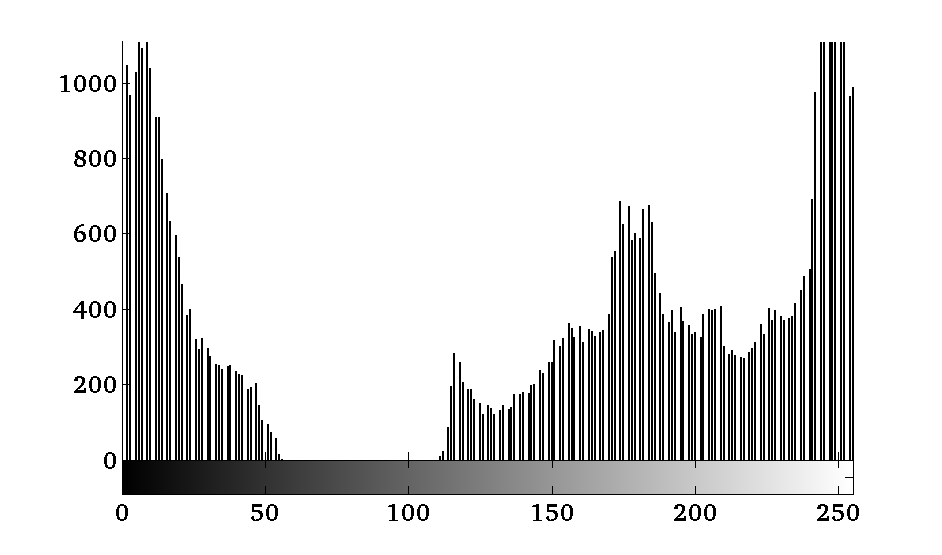}\label{frecuencia ring sum}}\\
\subfigure[Ring subtraction]{\includegraphics[width=4.5cm, height=3.2cm]
{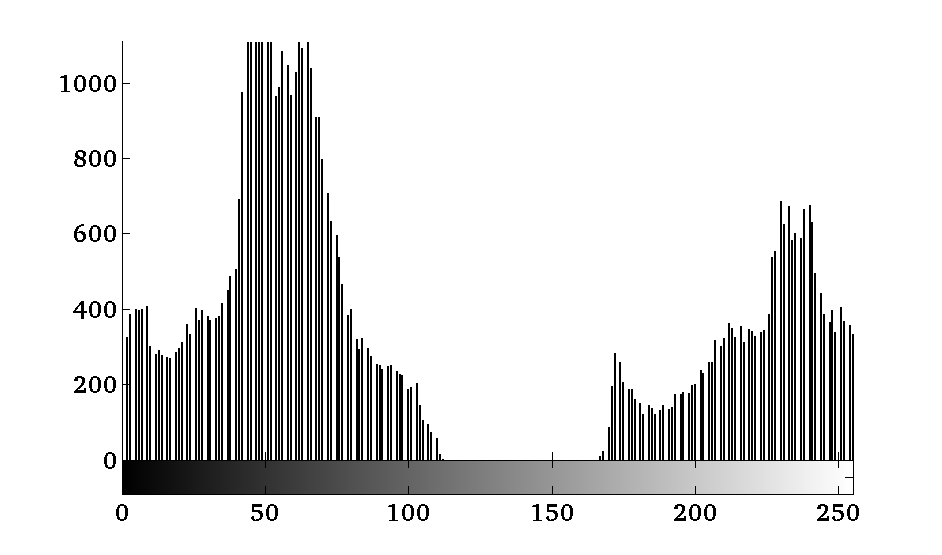}\label{frecuencia ring rest}}
\subfigure[Classical addition]{\includegraphics[width=4.5cm, height=3.2cm] 
{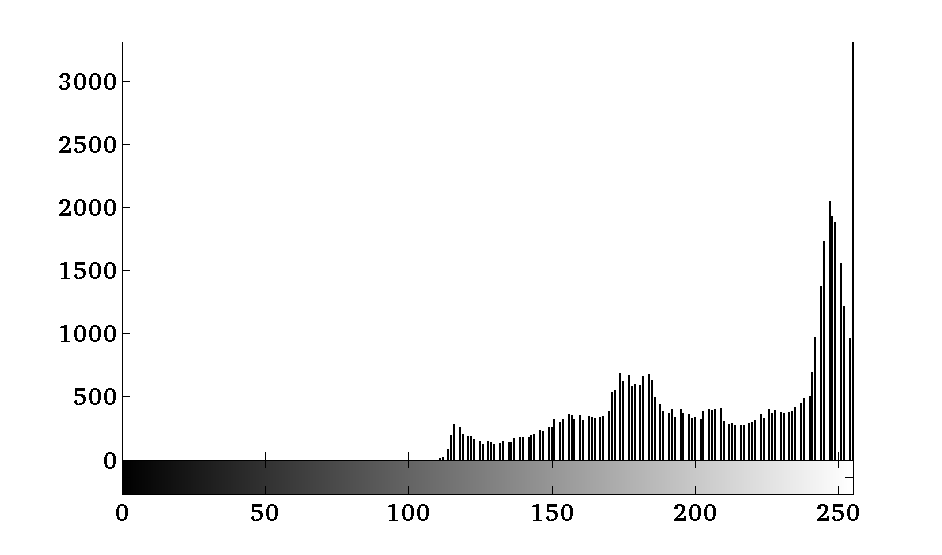}\label{frecuencia normal sum}}
\subfigure[Classical subtraction]{\includegraphics[width=4.5cm, height=3.2cm]
{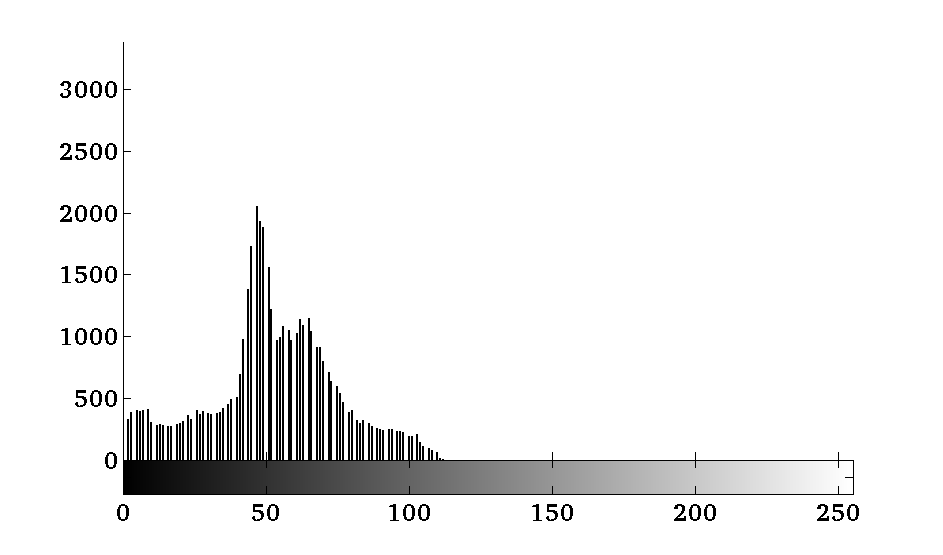}\label{frecuencia normal rest}}
\vspace{-.1in}
\caption{A sample image and his histogram of frecuency.}
\label{real histogram bird}
\vspace{-.25in}
\end{figure}

Note that the effect of this operations only changes the value of the pixels but do not changes the frecuency or the number of gray levels. In fact, graphically it is possible to see that the addition or subtraction in the ring only generates a shift in the frequency histogram, so, the images with the frecuency histogram in Figure \ref{frecuencia ring sum} and Figure \ref{frecuencia ring rest}, have the same properties and information that the original image in Figure \ref{histogram original image}. On the other hand, if we consider the classic addition and subtraction among images, it is easy to see in Figure \ref{frecuencia normal sum} and Figure \ref{frecuencia normal rest} that important information is lost due to the effect of truncation. In fact, this effect causes that high frequencies have been accumulated in $255$ in the addition case and in $0$ in the subtraction case (see Figures \ref{frecuencia normal sum} and \ref{frecuencia normal rest}).

Based on the above aspects, it is interesting to define the equivalence classes among images based on the concept of strong equivalence. This definition is necessary because the strong equivalent images has the same information, but from the point of view of intesity pixel, they do not have to be identically equal. In such sense, we will define the quotient group containing images in equivalent classes.

\begin{definition}[Equivalence Classes]
Given an image $\mathcal{A}$, we said that the image $\mathcal{B}$ is in the class of $\mathcal{A}$ if and only if $\mathcal{A}$ and $\mathcal{B}$ are strongly equivalents. We denote the equivalence class of $\mathcal{A}$ as $C_{\mathcal{A}}$.
\end{definition}

It is natural to consider the definition of the similarity index using the quotient space of the images by using strong equivalent images, considering the given equivalence classes above. For this reason, it is necessary to proof that the quotient space exists. More precisely, as strong equivalent images have been defined by the addition operation, the quotient group will be built on this operation.

\begin{theorem}
The set $N=\lbrace\mathcal{S}\in G_{k\times m} (\mathbb{Z}_{n})(+,\cdot),\ S\ \mbox{is a scalar image}\rbrace$ is a normal (or invariant) subgroup of $G_{k\times m}(\mathbb{Z}_{n})(+,\cdot)$.
\end{theorem}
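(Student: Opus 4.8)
The plan is to proceed in two stages: first establish that $N$ is a subgroup of the additive group $(G_{k\times m}(\mathbb{Z}_{n}),+)$, and then observe that normality comes essentially for free because this additive group is abelian. Throughout I would work only with the operation $(+)$, since $N$ is claimed to be a subgroup for the addition and not with respect to the multiplication $(\cdot)$.

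For the subgroup verification I would apply the one-step subgroup test. First, $N$ is nonempty, since the neutral element $\mathcal{O}$ (the image whose pixels are all zero) is itself a scalar image, hence $\mathcal{O}\in N$. Next, given two scalar images $\mathcal{S}_{1},\mathcal{S}_{2}\in N$ with constant pixel values $s_{1},s_{2}\in\mathbb{Z}_{n}$ respectively, the difference $\mathcal{S}_{1}-\mathcal{S}_{2}$ is computed pixel by pixel and therefore has the constant value $s_{1}-s_{2}\pmod n$ in every entry; that is, $\mathcal{S}_{1}-\mathcal{S}_{2}$ is again a scalar image, so $\mathcal{S}_{1}-\mathcal{S}_{2}\in N$. (Here I would invoke, as already established, that the additive inverse $-\mathcal{S}$ of a scalar image is obtained by inverting each pixel in $\mathbb{Z}_{n}$, which again yields a constant image.) This closes $N$ under subtraction and confirms it is a subgroup.

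The normality step is then immediate. Since every pixel lives in $\mathbb{Z}_{n}$, which is commutative under addition, and image addition was defined coordinatewise, the group $(G_{k\times m}(\mathbb{Z}_{n}),+)$ is abelian. In an abelian group every subgroup is normal: explicitly, for any $\mathcal{A}\in G_{k\times m}(\mathbb{Z}_{n})$ and any $\mathcal{S}\in N$ one has $\mathcal{A}+\mathcal{S}-\mathcal{A}=\mathcal{S}\in N$, so the conjugate of $N$ by an arbitrary element coincides with $N$. The only point deserving care --- though it is hardly a genuine obstacle --- is to state clearly that the invariance is being checked in additive notation and that it collapses to a triviality precisely because of the commutativity of $(+)$; the substantive content of the theorem is thus entirely contained in the routine closure checks of the previous paragraph.
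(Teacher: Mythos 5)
Your proof is correct and follows essentially the same route as the paper's: verify that $N$ is a subgroup by elementwise checks on scalar images, then obtain normality for free from the commutativity of $(+)$ inherited from $\mathbb{Z}_{n}$. The only cosmetic differences are that you package the subgroup verification as the one-step test (nonempty and closed under subtraction) where the paper checks closure, neutral element, and inverses separately, and you phrase normality via conjugation $\mathcal{A}+\mathcal{S}-\mathcal{A}\in N$ where the paper verifies the coset equality $\mathcal{A}+N=N+\mathcal{A}$.
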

\begin{proof}
First we will show that the set $N$ is a subgroup of $G_{k\times m} (\mathbb{Z}_{n})(+,\cdot)$.
\begin{enumerate}
\item{\textbf{Addition operation is well-defined}.}
{If $\mathcal{S},\ \mathcal{C}\in N \Rightarrow \forall\ i,j,i',j'(\ i,i'\leq k \wedge j,j'\leq m)$, $s_{i,j}=s_{i',j'} \wedge c_{i,j}=c_{i',j'}$ where $\lbrace s_{i,j},\ s_{i',j'}\rbrace$ are elements of $\mathcal{S}$ and $\lbrace c_{i,j},\ c_{i',j'}\rbrace$ are elements of $\mathcal{C}$. Then, $s_{i,j}+c_{i,j}=s_{i',j'}+c_{i',j'}$ and the image $\mathcal{S}+\mathcal{C}$ has all elements equals, so, $\mathcal{S}+\mathcal{C}\in N$}.
\item {\textbf{Neutral element}.}
{The neutral element ($\mathcal{O}$) in the group is the image with all pixels equals to zero, defined at the previous subsection \ref{Image via Ring Theory}.}
\item {\textbf{Inverse element}.}
{Let $\mathcal{S}\in N\Rightarrow \forall\ i,j,i',j',\  s_{i,j}=s_{i',j'}$, but $s_{i,j}\in \mathbb{Z}_{n}\Rightarrow s_{i,j}$ has a inverse element $s'_{i,j}$ in $\mathbb{Z}_{n}$. Considering the image $\mathcal{S}'$ with all elements equal to $s'_{i,j}$, we obtain that $\mathcal{S}+\mathcal{S}'=\mathcal{O}$, then $\mathcal{S}'$ is the inverse of $\mathcal{S}$.}
\end{enumerate}
We have proved that $N$ is a subgroup of $G_{k\times m}(\mathbb{Z}_{n})(+,\cdot)$, only remains to prove that $N$ is normal. For this purpose we use that $N$ is a normal subgroup of $G$ if $\forall g\in G,\ g+N=N+g$ (see \cite{Kostrikin,Neiderreiter,Teresita Noriega} for more details).

Let $\mathcal{A}\in\displaystyle\frac{G_{k\times m}(\mathbb{Z}_{n})(+,\cdot)}{N}$ and $\mathcal{S}\in N$, let us define $\mathcal{B}=\mathcal{A}+\mathcal{S}$ where $b_{i,j}=a_{i,j}+s_{i,j}\ \forall i,j$. But, we know that $\mathbb{Z}_{n}$ is abelian, therefore $G_{k\times m}(\mathbb{Z}_{n})(+,\cdot)$ is abelian too, so, $b_{i,j}=s_{i,j}+a_{i,j}\ \forall i,j, \Rightarrow \mathcal{B}=\mathcal{S}+\mathcal{A}$. Finally $\mathcal{A}+\mathcal{S}=\mathcal{S}+\mathcal{A}$.
\end{proof}

With the last result, it was proved that the quotient space is well-defined, and it will be denoted by $\displaystyle\frac{G_{k\times m} (\mathbb{Z}_{n})(+,\cdot)}{N}$. Remember that the elements of this quotient space are the equivalence classes. Now, it does not matter which element of the class is chosen to carry out the operation, since any of these images are a representative element of its respective equivalence class. 

\begin{definition}[Natural Entropy Distance]
Let $C_{\mathcal{A}}$ and $C_{\mathcal{B}}$ be two elements in the quotient space $\displaystyle\frac{G_{k\times m} (\mathbb{Z}_{n})(+,\cdot)}{N}$, $\mathcal{A}_{1}\in C_{\mathcal{A}}$ and $\mathcal{B}_{1}\in C_{\mathcal{B}}$ are images. The Natural Entropy Distance (NED) between $\mathcal{A}_{1}$ and $\mathcal{B}_{1}$ is defined by
\begin{equation}
\label{equation ned}
\hat{\nu}(\mathcal{A}_{1},\mathcal{B}_{1})=E(\mathcal{A}_{1} + (-\mathcal{B}_{1})).
\end{equation}
\end{definition}

The ``Natural Entropy Distance'' have the following properties related with the axioms of distance:
\begin{enumerate}
\item{\textbf{Non-Negativity}: $\hat{\nu}(\mathcal{A}_1,\mathcal{B}_1)\geq 0$.\\
Since entropy function is always positive, then
$$
\hat{\nu}(\mathcal{A}_{1},\mathcal{B}_{1})=E(\mathcal{A}_{1} + (-\mathcal{B}_{1}))\geq 0.$$}
\item{\textbf{Identity of indiscernibles}: $\hat{\nu}(\mathcal{A}_1,\mathcal{B}_1)=0\Leftrightarrow C_{\mathcal{A}}=C_{\mathcal{B}}$.
\begin{eqnarray*}
\hat{\nu}(\mathcal{A}_1,\mathcal{B}_1)=0 &\Leftrightarrow & E(\mathcal{A}_{1} + (-\mathcal{B}_{1}))= 0\\ &\Leftrightarrow &\mathcal{A}_{1}+(-\mathcal{B}_{1})=\mathcal{S}, \ \mathcal{S}\in N\\
&\Leftrightarrow &\mathcal{B}_{1}\in C_{\mathcal{A}} \\
&\Leftrightarrow &C_{\mathcal{A}}=C_{\mathcal{B}}.
\end{eqnarray*}}
\item{\textbf{Reflexivity}: $\hat{\nu}(\mathcal{A}_1,\mathcal{B}_1)=\hat{\nu}(\mathcal{B}_1,\mathcal{A}_1)$.
\begin{eqnarray*}
\hat{\nu}(\mathcal{A}_1,\mathcal{B}_1) &=& E(\mathcal{A}_{1} + (-\mathcal{B}_{1}))\\
								&=& E(-(\mathcal{B}_{1} + (-\mathcal{A}_{1})))\\
								&=& E(\mathcal{B}_{1} + (-\mathcal{A}_{1}))\\
								&=& \hat{\nu}(\mathcal{B}_{1},\mathcal{A}_{1}).
\end{eqnarray*}}
\end{enumerate}

If it are considered the images of Figure \ref{different images comparison}, applying the natural entropy distance the result shows that:
\begin{align*}
\hat{\nu}(Figure\ \ref{one_a},\ Figure\ \ref{two_b})=0.2514.
\end{align*}
With this result, one can appreciate that have been differentiated images, including the spatial information too. The similarity index that was proposed in (\ref{equation ned}) is very simple and computationally efficient. 

\section{Natural Entropy Distance and $\mathit{MSHi}$}
\label{experiments}
Taking in consideration  the good properties that, in general, the NED definition has, one sees logical to take this new similarity index as the new stopping criterion of $\mathit{MSHi}$. Explicitly, the new stopping criterion is: 
\begin{equation}
\label{definition stop criterion new}
E(\mathcal{A}_{k}+(-\mathcal{A}_{k-1}))\leq \epsilon ,
\end{equation}
where $\epsilon$ and $k$ are respectively the threshold to stop the iterations and the number of iterations.

Since $\mathit{MSHi}$ is an iterative algorithm, we obtain a sequence that the  processed image becames more homogenous as the algorithm advances. It is intuitive that comparing a resultant image $\mathcal{A}_{k}$ at a given iteration with the previous image $\mathcal{A}_{k-1}$ of the corresponding sequence, give us a measure about how much these images look like each other. The Algorithm \ref{AlgoritmoNew} shows the general structure of the $\mathit{MSHi}$ with NED as stopping criterion. We denote this algorithm as $\mathit{MSHi}_{NED}$, and $\mathit{MSHi}_{WE}$ when the similarity index (\ref{old criterion}) is used as stopping criterion.
\begin{algorithm}[H]
\caption{$\mathit{MSHi}$ with Natural Entropy Distance ($\mathit{MSHi}_{NED}$)}\label{AlgoritmoNew}
\KwData{Original Image ($\mathcal{A}$); Threshold to stop ($\epsilon$).}
 Initialize $\mathcal{B}_{1}=\mathcal{A}$ and $errabs=\infty$\;	
 \While{$errabs > \epsilon$}{
 \begin{itemize}
 \item{Filter the original image according to the mean shift algorithm (Section \ref{sub:Mean Shift});\\ store in $\mathcal{B}_{2}$ the filtered image;} 
 \item{Calculate using (\ref{definition stop criterion new}) the difference between $\mathcal{B}_{1}$ and $\mathcal{B}_{2}$, $errabs=\hat{\nu}(\mathcal{B}_{1},\mathcal{B}_{2})$.}
 \item{Update the images: $\mathcal{B}_{1} = \mathcal{B}_{2}$.}
 \end{itemize}\vspace{-.1in}}
\KwResult{$\mathcal{B}_{1}$ (is the segmented image).}
\end{algorithm}

The principal goal of this section is to evaluate the new stopping criterion in the $\mathit{MSHi}$ and to illustrate that, in general, with this new stopping criterion the algorithm has better stability in the segmentation process. For this aim, we used three different images for the experiments, which have been chosen according to the differences among their respective levels of high and low frequencies. The first image (``Bird") has low frequencies, the second (``Baboon") has high frequencies and in the image (``Montage") has mixture low and high frequencies.
\vspace{-.2in}
\begin{figure}[H]
\centering
\subfigure[Bird]{\includegraphics[width=3.5cm, height=3.5cm] {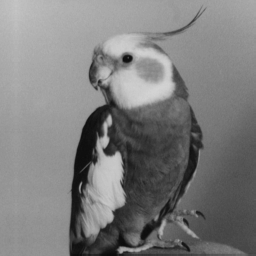}}
\subfigure[Baboon]{\includegraphics[width=3.5cm, height=3.5cm]
{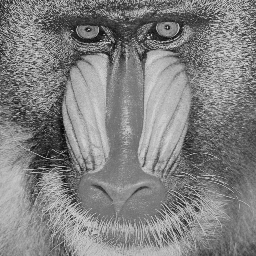}\label{baboon}}
\subfigure[Montage]{\includegraphics[width=3.5cm, height=3.5cm]
{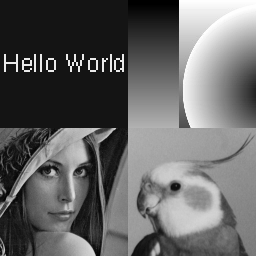}}
\vspace{-.15in}
\caption{Test images}
\vspace{-.25in}
\end{figure}

All segmentation experiments were carried out by using a uniform kernel. In order to be effective the comparison between the old stopping criterion and the new stopping criterion, we use the same value of $h_{r}$ and $h_{s}$ in $\mathit{MSHi}$ ($h_{r}=12,\ h_{s}=15$). The value of $h_{s}$ is related to the spatial resolution of the analysis, while the value $h_{r}$ defines the range resolution. In the case of the new stopping criterion, we use the stopping threshold $\epsilon= 0.9$ and when the old stopping criterion was used $\epsilon= 0.01$.

Figure \ref{test images} shows the segmentation of the three images. Observe that, in all cases, the $\mathit{MSHi}$ had better result when the new stopping criterion was used.

\begin{figure}[H]
\vspace{-.2in}
\centering
\begin{tabular}{ccc}\hline
Bird & Baboom & Montage \\\hline
\subfigure[$\mathit{MSHi}_{NED}$]{\includegraphics[width=4.09cm, height=4.09cm] {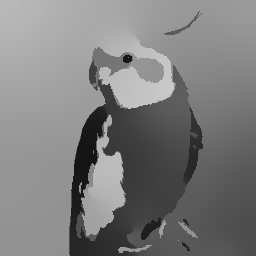}\label{bird_new}} &
\subfigure[$\mathit{MSHi}_{NED}$]{\includegraphics[width=4.1cm, height=4.1cm]
{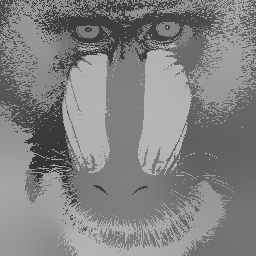}\label{baboon_new}} &
\subfigure[$\mathit{MSHi}_{NED}$]{\includegraphics[width=4.1cm, height=4.1cm] {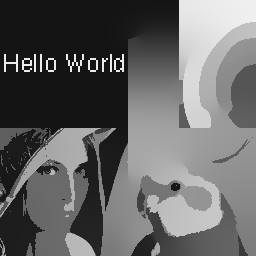}\label{montage_new}}\\
\subfigure[$\mathit{MSHi}_{WE}$]{\includegraphics[width=4.09cm, height=4.09cm]
{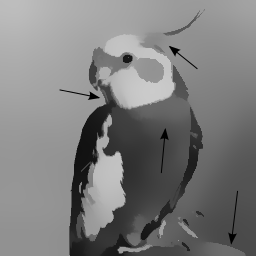}\label{bird_old} } &
\subfigure[$\mathit{MSHi}_{WE}$]{\includegraphics[width=4.1cm, height=4.1cm]
{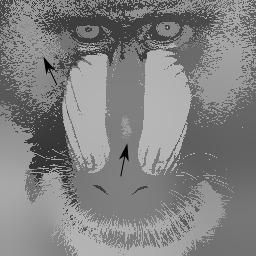}\label{baboon_old}} &
\subfigure[$\mathit{MSHi}_{WE}$]{\includegraphics[width=4.1cm, height=4.1cm]
{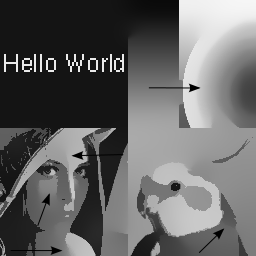}\label{montage_old}}\\\hline
\end{tabular}
\caption{Segmentation of the experimental images. In the first row are shown the segmentation using the new stopping criterion and in the secound row are the segmented images using the old stopping criterion.}
\label{test images}
\vspace{-.5in}
\end{figure}

When one compares Figures \ref{bird_new} and \ref{bird_old}, in the part corresponding to the face or breast of the bird, more homogeneous area, with the new stopping criterion was obtained (see arrows in Figure \ref{bird_old}). Observe that, with the old stopping criterion the segmentation gives regions where different gray levels are originated. However, these regions really should have only one gray level. For example, Figure \ref{baboon_new} and \ref{baboon_old} show that the segmentation is more homogeneous when the new stopping criterion was used (see the arrows). In the case of the ``Montage" image one can see that, in Figure \ref{montage_old} exists many regions that contains different gray levels when these regions really should have one gray level (see for example the face of Lenna, the circles and the breast of the bird). These good results are obtained because the defined new stopping criterion  through the natural distance among images in expression (\ref{definition stop criterion new}) offers greater stability to the $\mathit{MSHi}$.

Figure \ref{profile test} shows the profile of the obtained segmented images by using the two stopping criterion. The plates that appear in Figure \ref{profile new} and \ref{profile old} are indicative of equal intensity levels. In both graphics the abrupt falls of an intensity to other represent the different regions in the segmented image. Note that, in Figure \ref{profile new} exists, in the same region of the segmentation, least variation of the pixel intensities with regard to Figure \ref{profile old}. This illustrates that, in this case the segmentation was better when the new stopping criterion was used. 
\begin{figure}[h]
\vspace{-.2in}
\centering
\subfigure[New Criterion]{\includegraphics[scale=0.5]
{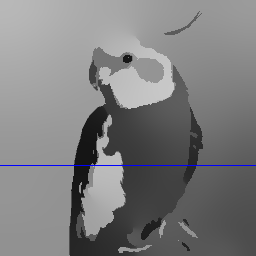}}
\subfigure[Profile]{\includegraphics[scale=0.5]{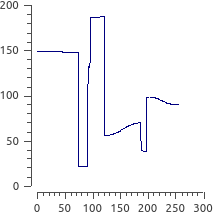} 
\label{profile new}}
\subfigure[Old Criterion]{\includegraphics[scale=0.5]
{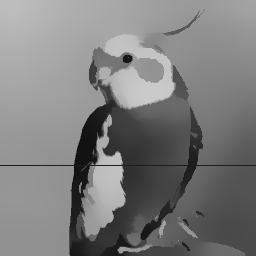}}
\subfigure[Profile]{\includegraphics[scale=0.5]{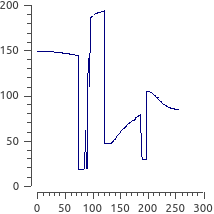}  
\label{profile old}}
\vspace{-.1in}
\caption{An intensity profile through of a segmented image. Profile is indicated by a line. (a) and (c) are the segmented images and (b) and (d) are the profile of (a) and (c) respectively.}
\label{profile test}
\vspace{-.3in}
\end{figure}

Figure \ref{iterations} shows the performance of the two stopping criterion in the experimental images. In the ``$x$" axis appears the iterations of $\mathit{MSHi}$ and in the ``$y$" axis is shown the obtained values by the stopping criterion in each iteration of the algorithm.

\begin{figure}[h]
\vspace{-.2in}
\centering
\begin{tabular}{ccc}\hline
Bird & Baboom & Montage\\\hline
\subfigure[$\mathit{MSHi}_{NED}$]{\includegraphics[width=3.5cm,height=3.5cm] {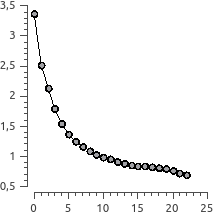}\label{iterations_new_bird} } &
\subfigure[$\mathit{MSHi}_{NED}$]{\includegraphics[width=3.5cm, height=3.5cm]
{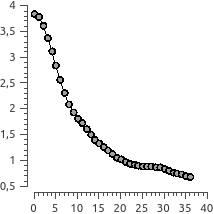}\label{iterations_new_baboon}} &
\subfigure[$\mathit{MSHi}_{NED}$]{\includegraphics[width=3.5cm, height=3.5cm]
{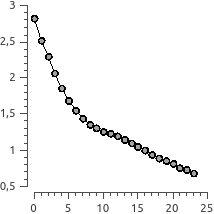}\label{iterations_new_montage}}\\
\subfigure[$\mathit{MSHi}_{WE}$]{\includegraphics[width=3.5cm, height=3.5cm]
{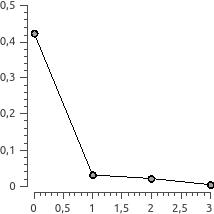}\label{iterations_old_bird}} &
\subfigure[$\mathit{MSHi}_{WE}$]{\includegraphics[width=3.5cm, height=3.5cm]
{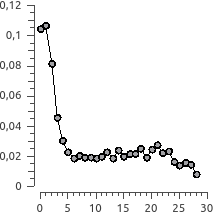}\label{iterations_old_baboon}} &
\subfigure[$\mathit{MSHi}_{WE}$]{\includegraphics[width=3.5cm, height=3.5cm] {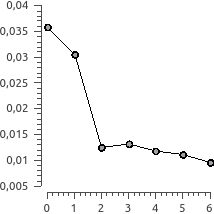}\label{iterations_old_montage}}\\\hline
\end{tabular}
\vspace{-.1in}
\caption{Behavior of the stability of each algorithm according to the test images.}
\label{iterations}
\vspace{-.3in}
\end{figure}

The graphics of iterations of the new stopping criterion (Figure \ref{iterations_new_bird}, \ref{iterations_new_baboon}, \ref{iterations_new_montage}) show a smoother behavior; that is, the new stopping criterion has a stable performance through the $\mathit{MSHi}$. The new stopping criterion not only has good theoretical properties, but also, in the practice, has very good behavior.

On the other hand, if we analyze the old stopping criterion in the experimental images (Figure \ref{iterations_old_bird}, \ref{iterations_old_baboon}, \ref{iterations_old_montage}), one can see that the performance in the $\mathit{MSHi}$ is unstable. In general, we have this type of situation when the stopping criterion defined in (\ref{old criterion}) is used. This can originate bad segmented images.

\section{Conclusions}
\label{conclusion}
In this work, a new stopping criterion, for the iterative $\mathit{MSHi}$, based on the Ring Theory was proposed. The new stopping criterion establishes a new measure for the comparison of two images based on the use of the entropy concept and the spatial information. The quotient space was defined using the equivalent classes of images, to be able of selecting any element of the class. Through the obtained theoretical and practical results, it was possible to prove that the new stopping criterion had very good performance in the algorithm $\mathit{MSHi}$, and was more stable that the old criterion.



%
%
%
%


\begin{thebibliography}{0}
\bibitem{Cheng95}
Y. Cheng, ``Mean Shift, Mode Seeking, and Clustering'', {\it IIEEE Trans. on Pattern Analysis and Machine Intelligence} Ph.~D. Thesis, New York University,  {\bf 17 }(8) (1995), pp.~790--799.

\bibitem{Comaniciu00} D. I. Comaniciu, ``Nonparametric Robust Method for Computer Vision. ,''Ph.~D. Thesis, Rutgers, The State University of New Jersey, (2000).

\bibitem{Comaniciu02}
D. Comaniciu, P. Meer, ``Mean Shift: A Robust Approach toward Feature Space Analysis'', {\it IEEE Trans. on Pattern Analysis and Machine Intelligence} Ph.~D. Thesis, New York University,  {\bf 24}(5) (1974).

\bibitem{Dominguez11} D. Dominguez and R. Rodriguez, ``Convergence of the Mean Shift using the Linfinity Norm in Image Segmentation,'' {\it International Journal of Pattern Recognition Research}, {\bf 1} (2011) pp.~3--4.

\bibitem{Fukunaga} K. Fukunaga and L. D. Hostetler,, ``The Estimation of the Gradient of a Density Function,'' {\it IEEE Trans. on Information Theory}, {\bf IT-21}(1) (1975) pp.~32--40.

\bibitem{Grenier06} T. Grenier, C. Revol-Muller, F. Davignon, and G. Gimenez, ``Hybrid Approach for Multiparametric Mean Shift Filtering'', {\it IEEE Image Processing, International Conference,} Atlanta, GA, {\bf 17}(8) (2006) pp.~8--11.

\bibitem{Yasel13} \textsc{Y.~Garces, E.~Torres, O.~Pereira, C.~Perez, R.~Rodriguez}: \emph{Stopping Criterion for the Mean Shift Iterative Algorithm}, Springer, Progress in Pattern Recognition, Image Analysis, Computer Vision, and Applications, Lecture Notes in Computer Science, Vol. 8258, pp. 383--390, 2013.

\bibitem{Rodriguez11} R. Rodriguez, E. Torres, and J. H. Sossa, ``Image Segmentation based on an Iterative Computation of the Mean Shift Filtering for different values of window sizes,'' {\it International Journal of Imaging and Robotics}, {\bf 6} (2011) pp.~1--19.

\bibitem{Rodriguez11a} R. Rodriguez, A. G. Suarez, and J. H. Sossa, 
``A Segmentation Algorithm based on an Iterative Computation of the Mean Shift Filtering,'' 
{\it Journal Intelligent $\&$ Robotic System}, {\bf 63}(3-4) (2011), pp.~447--463.

\bibitem{Rodriguez12} R. Rodriguez, E. Torres, and J. H. Sossa, ``Image Segmentation via an Iterative Algorithm of the Mean Shift Filtering for Different Values of the Stopping Threshold,'' {\it International Journal of Imaging and Robotics}, {\bf 63} (2012), pp.~1--19.

\bibitem{Rodriguez08} R. Rodriguez, ``Binarization of medical images based on the recursive application of mean shift filtering: Another algorithm,'' {\it Journal of Advanced and Applications in Bioinformatics and Chemistry, Dove Medical Press}, {\bf I} (2008) pp.~1--12.

\bibitem{Suyash06} P. Suyash, and R. Whitake, ``Higher-Order Image Statistics for Unsupervised, Information-Theoretic, Adaptive, Image Filtering'', {\it IEEE Trans. on Pattern Analysis and Machine Intelligence}, {\bf 28}(3) 2006 pp.~364--376.

\bibitem{Shannon48} C. Shannon, ``A Mathematical Theory of Communication'', 
{\it Bell System Technology Journal}, {\bf 24}(5) (1948) pp.~370-423, pp.~623--656.

\bibitem{Shen07} C. Shen,  and M. J. Brooks, ``Fast Global Kernel Density Mode Seeking: Applications to Localozation and Tracking'', {\it IEEE Trans. on Image Processing}, {\bf 16}(5) 2007 pp.~1457--1469.

\bibitem{Zhang03} H. Zhang, J. E. Fritts and S. A. Goldma, ``An Entropy-based Objective Evaluation Method for Image Segmentation, Storage and Retrieval Methods and Applications for Multimedia,'' {\it Proceeding of The SPIE}, {\bf 5307} 2003 pp.~38--49.

\bibitem{Wand95} Wand, M., Jones, M.,``Kernel Smoothing'', Chapman and Hall, p. 95, 1995.

\bibitem{Kostrikin}
A. I. Kostrikin. \textit{``Introducci\'on al Algebra''}, Editorial Mir Moscu, 1983.

\bibitem{Neiderreiter}
LIDL, Rudolf; Harald, Neiderreiter. \textit{``Introduction to finite fields and their applications''}, Cambridge university press, 1994.

\bibitem{Teresita Noriega}
Noriega S\'anchez, Teresita; Arazoza Rodr\'iguez, Hector. \textit{``Algebra''}, Editorial F\'elix Varela, vol. 2, 2003.

\end{thebibliography}
\end{document}